\documentclass{article} 
\usepackage{collas2022_conference,times}


\usepackage{amsmath,amsfonts,bm}









\def\eqref#1{equation~\ref{#1}}









\def\1{\bm{1}}










\DeclareMathAlphabet{\mathsfit}{\encodingdefault}{\sfdefault}{m}{sl}
\SetMathAlphabet{\mathsfit}{bold}{\encodingdefault}{\sfdefault}{bx}{n}













\usepackage{hyperref}
\hypersetup{
    colorlinks=true,
    linkcolor=red,
    filecolor=magenta,      
    urlcolor=blue,
    citecolor=purple,
    pdftitle={Overleaf Example},
    pdfpagemode=FullScreen,
    }

\usepackage[noend]{algorithm,algorithmic}
\usepackage{caption}
\usepackage{amsmath, amssymb}
\usepackage[utf8x]{inputenc} 
\usepackage{hyperref}
\usepackage{interval}
\usepackage{caption}
\usepackage{hyperref}
\usepackage{cleveref}%
\usepackage{amsthm}

\usepackage{tikz}
\setcitestyle{square}
\setcitestyle{numbers}

\definecolor{darkgray}{rgb}{0.50, 0.50, 0.50}
\definecolor{gray}{rgb}{0.70, 0.70, 0.70}
\definecolor{lightgray}{rgb}{0.92, 0.92, 0.92}

\newtheorem{theorem}{Theorem}
\newtheorem{corollary}[theorem]{Corollary}

\title{Matrix Completion with Heterogeneous Cost}


\author{Ilqar Ramazanli \\
Carnegie Mellon University\\
Pittsburgh, USA \\
\texttt{iramazan@alumni.cmu.edu}  
}

%

 \collasfinalcopy 

\begin{document}

\maketitle

\begin{abstract}
The matrix completion problem has been studied broadly under many underlying conditions.
In many real-life scenarios, we could expect elements from distinct columns or distinct positions to have a different cost.
In this paper, we explore this generalization under adaptive conditions.
We approach the problem under two different cost models. 
The first one is that entries from different columns have different observation costs, but, within the same column, each entry has a uniform cost.
The second one is any two entry has different observation cost, despite being the same or different columns.
We provide complexity analysis of our algorithms and provide tightness guarantees.
\end{abstract}

\vspace{7mm}

\section{Introduction}

Matrix completion and compressed sensing has gained attention of machine learning and signal processing communities recently.
This attention particularly increased since modern data analysis gained research attention and low-rank structures has started to be studied in detail.
Announcement of Netflix Prize problem has especially triggered these studies and many researchers started studying underlying structure of real-life datasets and their singular value decomposition.
One of the earliest work in this field is due to \cite{recht1} which it discusses that the problem could be solved as convex relaxation of the simplest problem that could be attained from observed data:

\begin{align*}
    \text{minimize} \hspace{5mm}  & \text{rank}( X ) \\
    \text{subject to} \hspace{5mm} & X_{ij} = M_{i,j} \text{ for } (i,j) \in \Omega
\end{align*}

which it has been discussed in \cite{chistov1984complexity} that solving this problem is exponential in terms of the size of the matrix both theoretically and practically. 
Therefore, the following relaxation of the problem has been suggested in \cite{recht1}:

\begin{align*}
    \text{minimize} \hspace{5mm}  & \| X \|_* \\
    \text{subject to} \hspace{5mm} & X_{ij} = M_{i,j} \text{ for } (i,j) \in \Omega
\end{align*}

where $\| . \|_*$ stands for nuclear norm minimization.
Authors has proved that given $| \Omega | > C n^{1.2} r \log{n}$ and if the matrix is not coherent then the problem above has a unique solution given $n = \mathrm{max}(n_1, n_2)$.
In follow-up works, the problem has been studied in more details and more tight bounds has been suggested and proved \cite{gross, recht2, tao}.\\[2ex]
Recently it has been shown in many problems adaptive sensing gives stronger performance guarantees \cite{paramythis2003adaptive, riedmiller1992rprop} compared to traditional methods. 
Active learning has been comprehensively studied in many survey papers   \cite{settles2009active, arnold2005survey}.
For optimization problems adaptivity has been studied in \cite{zeiler2012adadelta, becigneul2018riemannian, ramazanli2020adaptive, ma2019adequacy}, for curriculum learning it has been studied in \cite{kong2021adaptive, huang2020curricularface, zhao2020egdcl} and the idea has been applied to many other problems.

In terms of compressed sensing and exact matrix completion some of the earliest works are due \cite{frieze2004fast, deshpande2006matrix, akshay1}. 
Later many active matrix completion algorithms proposed by researchers \cite{ruchansky2015matrix, misir2019active, bhargava2017active}.
In the streamline of works \cite{akshay1, akshay2} authors suggests a single phase algorithm which later on further extended by \cite{nina, poczos2020optimal, ramazanli2022adaptive, ilqarsingle}.\\[2.5ex]
Matrix completion problem has been studied both with noise and noiseless setting. 
In noisy setting, generally we try to estimate the underlying low-rank structure, using the idea of to threshold singular values.
Some of the example works are \cite{candes2010matrix, keshavan2009matrix,cai2010singular,rohde2011estimation, xie2017low}.
In noiseless setting, we generally call the problem exact low-rank matrix completion and it also has been in the center of attention of many researchers \cite{jain2015fast, zhang2011strongly, candes2008exact, bertsimas2020fast}. \\[2.5ex]
So far all the discussed matrix completion algorithms described here works assuming observation complexity of each entry is the same across entries.
An interesting and intuitive extension of this problem is to assume different entries of the matrix having different cost.
It is in a way intuitive, that in an example problem of rating movies, watching longer movies requires more time cost than watching shorter ones.
Also assuming that different people having different time availability, even the cost of the same length movie could be perceived to be different across various people.\\[2.5ex]
Firstly, we will assume that underlying $m\times n$ sized rank-$r$ matrix $\mathbf{M}$ has different cost for each column. 
Specifically, there is a set of costs $\{c_1, c_2, \ldots, c_n\}$ which any entry $\mathbf{M}_{ij}$ has the cost $c_j$ no matter which index we pick for $i$.
Secondly, we assume that there is a set of cost entries $\{c_{11}, \ldots c_{mn}\}$ that each entry $i,j$ of the matrix $\mathbf{M}_{ij}$ has the cost of $c_{ij}$.
We give different optimality condition based on the different cost model.

\vspace{5mm}
\subsection{Preliminaries}

We briefly introduce notations and definitions used in this paper. 
We use similar notations as it has been used before in the literature to keep the consistency.
$\mathbf{M}$ stands for the target matrix that we want to recover which has size of $m \times n$ and rank of $r$.  
$\| x\|$ denote the $L_{2}$ norm of a vector $x \in \mathbb{R}^n$.
$x_{\Omega}$ denote the sub-vector of $x$ where coordinates are from $\Omega$.
For any $\mathbf{R}\subset[m]$, $\mathbf{M}_{\mathbf{R}:}$ stands for an $|\mathbf{R}| \times n$ sized submatrix of $\mathbf{M}$ that rows are restricted by $\mathbf{R}$. 
We define $\mathbf{M}_{:\mathbf{C}}$ in a similar way for restriction with respect to columns.
Intuitively, $\mathbf{M}_{\mathbf{R}:\mathbf{C}}$ defined for $|\mathbf{R}|\times |\mathbf{C}|$ sized submatrix of $\mathbf{M}$ with rows restricted to $\mathbf{R}$ and columns restriced to $\mathbf{C}$.
Moreover, for the special case $\mathbf{M}_{i:}$ stands for $i$-th row and $\mathbf{M}_{:j}$ stands for the $j$'th column.
Similarly, $\mathbf{M}_{i:\mathbf{C}}$ will represent the restriction of the row $i$ by $C$ and $\mathbf{M}_{\mathbf{R}:j}$ represents restriction of the column $j$ by $\mathbf{R}$.
As it's mentioned in \cite{ramazanliadaptive} the sparsity number is defined by first defining the nonsparsity number:

\begin{align*}
    \psi(\mathbf{M})=\mathrm{min}\{\|x\|_0 | x=\mathbf{M}z \text{ and } z\notin \mathrm{null}(\mathbf{M})  \} \hspace{15mm}  \psi(\mathbb{U})=\mathrm{min}\{\|x\|_0 | x\in \mathbb{U} \text{ and } x\neq 0 \}
\end{align*}

then using this number we define sparsity-number:

\begin{align*}
\overline{\psi}(x) = m-\psi(x) \hspace{20mm}
\overline{\psi}(\mathbf{M})=  m- \psi(\mathbf{M}) \hspace{20mm}
\overline{\psi}(\mathbb{U})=  m- \psi(\mathbb{U}) 
\end{align*}

\vspace{5mm}

\section{Main Results}

The algorithm $\mathbf{ERCS}$ is just modification of the algorithm \cite{akshay1} or \cite{ilqarsingle} in the input phase. 
We show observing just $d=\overline{\psi}(\mathbb{U})+1$ many entries in each column is enough te decide whether partially observed column is contained in the subspace or not.\\[2ex]
Under the condition that the column space $\mathbb{U}$ of the underlying matrix satisfies $r-1=\overline{\psi}(\mathbb{U})$, the observation complexity of the algorithm $\mathbf{ERCS}$ becomes $m\times r +(n-r)\times r = (m+n-r) r$ which is the degree of freedom of the set of $m\times n$ sized rank-$r$ matrices.
Therefore, under this condition $\mathbf{ERCS}$ is absolutely optimal

\begin{algorithm}
\caption*{ \hypertarget{ercs}{\textbf{ERCS: }}Exact recovery with column sparsity}
 \textbf{Input:}   $d=\overline{\psi}(\mathbb{U})+1$

 \textbf{Initialize:}  $k=0 , \widehat{\mathbf{U}}^0 = \emptyset$ 

\begin{algorithmic}[1]
    \STATE Draw uniformly random entries $\Omega \subset [m]$ of size $d$   
    \STATE Observe entire $\mathbf{M}_{\Omega:}$ 
    \FOR{$i$ from $1$ to $n$}        
    \STATE  \hspace{0.2in} \textbf{if} $\| \mathbf{M}_{\Omega:i}-\mathcal{P_U}_{\Omega} \mathbf{M}_{\Omega:i}\| >0$ 
    \STATE \hspace{0.2in}  \hspace{0.2in} Fully observe $\mathbf{M}_{:i}$ 
    \STATE  \hspace{0.2in}  \hspace{0.2in}  $\widehat{\mathbf{U}}^{k+1} \leftarrow \widehat{\mathbf{U}}^{k} \cup \mathbf{M_{:i}} $
    \STATE \hspace{0.2in}  \hspace{0.2in} Orthogonalize  $\widehat{\mathbf{U}}^{k+1}$
    \STATE \hspace{0.2in} \hspace{0.2in} $k=k+1$          
   \STATE   \hspace{0.2in}  \textbf{otherwise:} $\widehat{\mathbf{M}}_{:i} = \widehat{\mathbf{U}}^k \widehat{\mathbf{U}}^{k^+}_{\Omega:} \widehat{\mathbf{M}}_{\Omega :i}$
\ENDFOR

\end{algorithmic}
 \textbf{Output:} Underlying matrix $\widehat{\mathbf{M}}$
\end{algorithm}

\begin{theorem}
Let $\mathbb{U}$ represent the column space of the $m\times n$  sized matrix $\mathbf{M}$ of rank $r$. 
Then, $\mathbf{ERCS}$ exactly recovers $\mathbf{M}$ by $$m\times r  + (n-r)(\overline{\psi}(\mathbb{U})+1)$$
observations.
\end{theorem}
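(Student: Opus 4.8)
The plan is to split the argument into \emph{correctness} (the output $\widehat{\mathbf{M}}$ equals $\mathbf{M}$) and \emph{accounting} (the number of observed entries), with correctness resting on a single structural fact about the nonsparsity number $\psi(\mathbb{U})$. First I would record the loop invariant that $\widehat{\mathbf{U}}^k\subseteq\mathbb{U}$ at every iteration, which is immediate since the only vectors ever appended to the running basis are genuine columns $\mathbf{M}_{:i}$ of $\mathbf{M}$ (and the orthogonalization step changes the chosen basis but not the spanned subspace). The technical core is the following claim: for any subspace $\mathbb{W}\subseteq\mathbb{U}$, any index set $\Omega\subseteq[m]$ with $|\Omega|=d=\overline{\psi}(\mathbb{U})+1$, and any column $x=\mathbf{M}_{:i}$, we have $x\in\mathbb{W}$ iff $x_\Omega\in\mathbb{W}_\Omega$, and moreover the coordinate-restriction map $v\mapsto v_\Omega$ is injective on $\mathbb{W}$. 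The forward implication is trivial; for the reverse, choose $w\in\mathbb{W}$ with $w_\Omega=x_\Omega$, note that $x-w\in\mathbb{U}$ vanishes on $\Omega$, hence has at least $|\Omega|=m-\psi(\mathbb{U})+1$ zero coordinates, i.e. fewer than $\psi(\mathbb{U})$ nonzero coordinates, so by the definition of $\psi(\mathbb{U})$ it equals $0$; the same computation applied to a vector of $\mathbb{W}$ vanishing on $\Omega$ gives injectivity. (This also forces $d\ge r$, consistent with the standard bound $\psi(\mathbb{U})\le m-r+1$.)

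With this lemma in hand, the test on line~4 is exactly a subspace-membership test: $\|\mathbf{M}_{\Omega:i}-\mathcal{P_U}_{\Omega}\mathbf{M}_{\Omega:i}\|>0$ holds iff $\mathbf{M}_{\Omega:i}\notin(\widehat{\mathbf{U}}^k)_{\Omega:}$ iff $\mathbf{M}_{:i}\notin\widehat{\mathbf{U}}^k$. Arguing by induction on $i$, after column $i$ is processed $\widehat{\mathbf{U}}^k$ equals $\mathrm{span}(\mathbf{M}_{:1},\dots,\mathbf{M}_{:i})$; hence column $i$ triggers a full observation precisely when $\mathbf{M}_{:i}$ is linearly independent of $\mathbf{M}_{:1},\dots,\mathbf{M}_{:i-1}$, which happens for exactly $r$ indices because $\mathrm{rank}(\mathbf{M})=r$, and at termination $\widehat{\mathbf{U}}^n=\mathbb{U}$. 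For a column $i$ that does not trigger a full observation we have $\mathbf{M}_{:i}=\widehat{\mathbf{U}}^k c$ for some coefficient vector $c$; injectivity of the restriction map makes $\widehat{\mathbf{U}}^k_{\Omega:}$ full column rank, so $\widehat{\mathbf{U}}^{k^+}_{\Omega:}\widehat{\mathbf{U}}^k_{\Omega:}=\mathbf{I}$ and line~9 returns $\widehat{\mathbf{U}}^k\widehat{\mathbf{U}}^{k^+}_{\Omega:}\mathbf{M}_{\Omega:i}=\widehat{\mathbf{U}}^k c=\mathbf{M}_{:i}$, i.e. exact recovery of that column. Together with the fully observed columns this yields $\widehat{\mathbf{M}}=\mathbf{M}$.

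For the accounting: the initialization observes $\mathbf{M}_{\Omega:}$, i.e. $d$ entries in each of the $n$ columns, costing $dn$; each of the $r$ columns that triggers a full observation costs an additional $m-d$ entries; the remaining $n-r$ columns cost nothing further. The total is therefore $dn+r(m-d)=rm+(n-r)d=m\times r+(n-r)(\overline{\psi}(\mathbb{U})+1)$, as claimed. I expect the structural lemma to be the only real obstacle: one has to see that the single sampling budget $d=\overline{\psi}(\mathbb{U})+1$ is simultaneously large enough to certify non-membership of every genuinely new column and large enough to pin down the coordinates of every already-spanned column exactly; once that is established, the induction and the counting are routine bookkeeping.
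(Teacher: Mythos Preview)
Your proof is correct and follows essentially the same architecture as the paper: an induction on columns whose key step is that the restriction map $v\mapsto v_\Omega$ is injective on $\mathbb{U}$ (equivalently, linear dependence on $\Omega$ implies linear dependence in $\mathbb{R}^m$) whenever $|\Omega|>\overline{\psi}(\mathbb{U})$. The only difference is packaging: the paper imports this fact as Lemmas~1 and~6 from an earlier work, whereas you prove it inline directly from the definition of $\psi(\mathbb{U})$, and your observation count $dn+r(m-d)$ is the same arithmetic as the paper's $rm+(n-r)d$, just grouped by phase rather than by column type.
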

\begin{proof}
We start by showing $\mathbf{ERCS}$ recovers $\mathbf{M}$ exactly and later we focus on observation count.
To prove correctness of exact recovery we use mathematical induction as follow:\\[2.5ex]
$Hypothesis$ : after $i$-th iteration $\mathbf{ERCS}$ already  correctly recovered first $i$ columns.\\[2.5ex]
$Base$ $case$ : $i=1$ is trivial as if at least one of the observed entries is nonzero we completely observe the column, which guarantees correctness. 
On the other hand, if it happens all of $\overline{\psi}(\mathbb{U})+1$ entries are zero, then the first column is indeed completely zero because the definition of the \textit{space sparsity number} implies there can be at most $\psi(\mathbb{U})$ many zero coordinates in a nonzero vector in the column space.\\[2.5ex]
$Hypothesis$ $proof$ : 
Let assume after step $i-1$, $\mathbf{ERCS}$ recovered first $i-1$ columns correctly and we want to show the algorithm exactly recovers $i$-th column too. We use the lemma 1 and 6 from \cite{poczos2020optimal} to verify this fact:\\[2.5ex] 
\textbf{Lemma 1:} \label{lm1} Let $\mathbb{U}$ be a subspace of $\mathbb{R}^{m}$ and $x^1,x^2,...,x^n$ be any set of vectors from $\mathbb{U}$. Then the linear 
dependence of $x^1_\Omega, x^2_\Omega,...,x^n_\Omega$ implies linear dependence of 
$x^1,x^2,...,x^n$, for any $\Omega \subset [m]$ such that $|\Omega| > \overline{\psi}(\mathbb{U})$.\\[2.5ex]
\textbf{Lemma 6:} 
Let $\mathbb{U}$ be a subspace of $\mathbb{R}^{m}$ and $x^1,x^2,...,x^n$ be any set of vectors from $\mathbb{U}$. Then the linear 
dependence of ${x^1}, {x^2},...,{x^n}$ implies linear dependence of 
$x^1_\Omega, x^2_\Omega,...,x^n_\Omega$, for any $\Omega \subset [m]$.
\\[2.5ex]
From the design of the algorithm $\mathbf{M}_{\Omega : {i}}$ is already observed.
Then, if in the line 4, $\mathbf{ERCS}$ decides the column is linearly independent with previous columns, as in the next line we completely observe the column there is no chance that the algorithm can do mistake under this case.
Therefore, the only remaining case is, if in the line 4 the algorithm decides the column $i$ is linearly dependent.\\[2.5ex]
From the statement of lemma 1, if a set of vectors from a subspace $\mathbb{U}$ are linearly dependent on a given subset of coordinates, then they are indeed linearly dependent. 
We conclude that the algorithm's decision is correct and by just back projection method, the algorithm recovers remaining entries of the partially observed column.
Therefore, column $i$ also recovered correctly and we are done with the proof of induction hypothesis. \\[2.5ex]
Our next goal is to show the observation complexity is $m\times r  + (n-r)(\overline{\psi}(\mathbb{U})+1)$.
From the lemma 1, we conclude that whenever current column is indeed linearly independent with previous columns, the $\mathbf{ERCS}$ also decides it is linearly independent.
Moreover, from the lemma 10, we conclude that if the current column is linearly dependent with previous columns, then in this case $\mathbf{ERCS}$ decides it is linearly dependent.
As there are $r$ many linearly independent columns in the underlying matrix $\mathbf{M}$, the algorithm decides independence exactly $r$ times and in each of them it does complete observations.
However, in remaining $n-r$ columns, number of observations is exactly $\overline{\psi}(\mathbb{U})+1$.
As a conclusion, number of total observations is: $rm+(n-r)(\overline{\psi}(\mathbb{U})+1)$
\end{proof}
\vspace{3mm}
\begin{corollary}
$\mathbf{ERCS}$ still performs correctly under the case in each column, number of observed entries is strcitly lower bounded by $\overline{\psi}(\mathbb{U})$ : $d \geq \overline{\psi}(\mathbb{U})+1$. Moreover, the number of observations will be $m\times r + (n-r)d$ 
\end{corollary}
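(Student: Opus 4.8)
The plan is to reuse the proof of the Theorem essentially verbatim, after observing that it never used the exact value $d=\overline{\psi}(\mathbb{U})+1$ but only the inequality $d\ge\overline{\psi}(\mathbb{U})+1$, equivalently $|\Omega|>\overline{\psi}(\mathbb{U})$. First I would re-examine the two places where the size of $\Omega$ entered the correctness argument. In the base case, the claim was that an all-zero sample of size $\overline{\psi}(\mathbb{U})+1$ forces the column to be identically zero, because a nonzero vector of the column space has at most $\overline{\psi}(\mathbb{U})$ vanishing coordinates; this conclusion is only reinforced when $|\Omega|=d\ge\overline{\psi}(\mathbb{U})+1$. In the inductive step, the only structural inputs were Lemma 1, whose hypothesis is exactly $|\Omega|>\overline{\psi}(\mathbb{U})$, together with Lemma 6, which holds for every $\Omega$. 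Hence for any $d\ge\overline{\psi}(\mathbb{U})+1$ the test in line 4 of $\mathbf{ERCS}$ — whether $\mathbf{M}_{\Omega:i}$ lies in the span of the restrictions of the previously accepted columns — agrees with the true linear (in)dependence of $\mathbf{M}_{:i}$ with those columns. Consequently every genuinely independent column is detected and fully observed, every genuinely dependent column is flagged and recovered exactly by the back-projection $\widehat{\mathbf{M}}_{:i}=\widehat{\mathbf{U}}^k \widehat{\mathbf{U}}^{k^+}_{\Omega:}\widehat{\mathbf{M}}_{\Omega:i}$, and the induction closes as in the Theorem.

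Next I would recount the observations. The initialization observes the $d$ selected rows across all $n$ columns, contributing $dn$ entries. By the correctness analysis exactly $r$ columns are declared independent (the true rank of $\mathbf{M}$), and each of them is completed in full, costing $m-d$ further entries beyond the $d$ already seen in $\Omega$; the remaining $n-r$ columns incur no additional observations. Summing gives $dn+r(m-d)=rm+(n-r)d$, which collapses to the Theorem's bound at $d=\overline{\psi}(\mathbb{U})+1$.

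I do not expect a genuine obstacle, since the statement is a direct relaxation of the Theorem; the parts that need care are purely bookkeeping. One is matching the strict inequality in the hypothesis of Lemma 1 with the stated condition, i.e. noting $d\ge\overline{\psi}(\mathbb{U})+1 \iff |\Omega|>\overline{\psi}(\mathbb{U})$. The other is charging the initial $\Omega$-observations exactly once, so that the per-column completion cost is $m-d$ rather than $m$. It may also be worth remarking that choosing $d>\overline{\psi}(\mathbb{U})+1$ is only sensible when $\overline{\psi}(\mathbb{U})$ is not known exactly and one must be conservative, since each extra sample inflates the total by $n-r$.
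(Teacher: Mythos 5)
Your proposal is correct and follows essentially the same route as the paper: the paper's proof of the corollary simply observes that the theorem's argument goes through verbatim because Lemma 1 only needs $|\Omega| > \overline{\psi}(\mathbb{U})$ and Lemma 6 holds for any $\Omega$, which is exactly the check you perform. Your explicit recount $dn + r(m-d) = rm + (n-r)d$ is a slightly more careful bookkeeping of the same observation tally the paper asserts.
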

\begin{proof}
The proof is exactly proceeds as proof of the theorem.
The key point is to notice, lemma 1 and lemma 6 are still satisfying.
\end{proof}
\vspace{2mm}
\paragraph{Optimality of $\mathbf{ERCS}$:}
We notice that many adaptive algorithms such as \cite{akshay1} and $\mathbf{ERCS}$ has two stages of observations.

\begin{itemize}
    \item[$i:$] Select subset of rows and observe them completely.
    \item[$ii:$] Detect linearly independent columns and observe them completely.
\end{itemize}
The discussion for tightness of the lemma 1 implies that $\mathbf{ERCS}$ is optimal deterministic two stage observation, low-rank exact recovery algorithm.
Moreover, in the corollary we discussed for any $d\geq \overline{\psi}(\mathbb{U})$ 
the $\mathbf{ERCS}$ algorithm would still perform correctly.
Therefore, having constant factor approximation of the \textit{space sparsity number} of the column space would lead asymptotically optimal algorithm:

\begin{align*}
  \widehat{d} \leq K d & \implies \widehat{d}+1 \leq K(d+1)     \\
  & \implies   (n-r)(\widehat{d}+1) \leq (n-r)K(d+1)            
\end{align*}

adding $mr$ to both side leads to

\begin{align*}
      rm+ (n-r)(\widehat{d}+1) &\leq rm +(n-r)K(d+1)            \\
                               & \leq Krm + K(n-r)(d+1)         \\
                               & = K\big(rm+(n-r)(d+1)\big)
\end{align*}

Notice that $rm+ (n-r)(\widehat{d}+1)$ is the observation complexity we have and $\big(rm+(n-r)(d+1)\big)$ is optimal two stage as we discussed here.
All together, the inequality  implies constant approximation to \textit{space sparsity number} gives as constant approximation to optimal solution.\\[2.5ex]
Here, we discuss matrix completion problem with relatively different setting.
Previously, we focused on the case, where each of the entry of underlying matrix $\mathbf{M}$ has the same observation cost.
In this section we discuss the completion problem where entries of the matrix has non uniform cost to observe.
We tackle with two type of non uniformity here: 
\begin{itemize}
    \item Entries has uniform cost across the same column, but different columns has different costs. 
    \item Each entry of the matrix has different cost.
\end{itemize}

\vspace{4mm}
\subsection{Uniform Cost Across Columns}

\textbf{Problem:} For any fixed $j$, the cost of observing $\mathbf{M}_{i:j}$ is equal to $\chi_j$ for any $1\leq i \leq m$, and $\chi_1, \chi_2,\ldots, \chi_n$ are arbitrary positive numbers and we target to recover the matrix $\mathbf{M}$ as cheap as possible. \\[2.5ex]
\textbf{Solution:} We propose a slight modification of the $\mathbf{ERCS}$ to solve optimally among the two staged methods as we discussed before.
Lets remind that in the algorithm we show that selecting any $d=\overline{\psi}(\mathbb{U})+1$ many rows is enough to guarantee exact recovery deterministically.
In the next stage, we iteratively go through columns one by one starting with the first column, and if we detect a column is linearly independent with previous ones, we completely observe it.
If not, we recover it using the pre-determined subspace.\\[2.5ex]
To adapt the solution for this problem, we just need to change the order of the columns we start to check. 
Basically, instead of starting with the first column, we should start with the cheapest one.
If we decide its not contained in the current subspace, we completely observe all entries and if it is contained then we just recover with the current subspace.
Then, we move to second cheapest column and so on so forth with the increasing order of cost.\\[2.5ex]
\textit{Correctness:} We can see that the proof of the correctness of $\mathbf{ERCS}$ is independent of the order of the columns. 
Therefore, selecting columns with increasing order of the cost would not change the correctness of the algorithm. \\[2.5ex]
\textit{Optimality:} The set of two stage algorithm can be parametrized by two numbers. 
First one is - $d$- the number of rows fully observed and the second is the subset of indices of columns to observe fully. 
We analyse the optimal algorithm for three cases of values of $d$:\\\\
1. $d \leq \overline{\psi}(\mathbb{U})$. 
It is obvious that optimal algorithm cannot have $d \leq \overline{\psi}(\mathbb{U})$, because from the discussion for tightness of the lemma 1 and optimality of $\mathbf{ERCS}$, there are matrices that selection of $d = \overline{\psi}(\mathbb{U})$ rows is not enough to guarantee the existence of $r$ linearly independent rows.
\medskip\\
2. $d = \overline{\psi}(\mathbb{U})+1$ It is a well known fact that the set of column basises are matroids and Greedy algorithms gives the optimal solution for matroids \cite{helman1993exact}.
Note that the algorithm discussed is efficient way of giving greedy solution.
\medskip\\
3. $d > \overline{\psi}(\mathbb{U})+1$. Lets assume that the optimal algorithm takes $\Tilde{d}>d$ rows in the first phases and  columns: $i_1,i_2,\ldots, i_{\Tilde{r}}$. 
We first note that, $\Tilde{r}=r$, it is because if $\Tilde{r}<r$ then selected columns are not enough to learn the column space and if $\Tilde{r}>r$  we can pick subset of these columns that is basis for column space and selecting this basis has less cost which contradicts to optimality. Therefore, $\Tilde{r}=r$ for optimal case.
Moreover, we can use the same subset selection argument to pick $\overline{\psi}(\mathbb{U})+1$ sized subset of rows then select the same set of columns and it will be cheaper.
Therefore, for optimality we should select exactly $d= \overline{\psi}(\mathbb{U})+1$ rows.

\vspace{5mm}
\subsection{Exact recovery with full heterogeneity }
\textbf{Problem:} For any $i,j$, the cost of observing $\mathbf{M}_{i:j}$ is equal to $\chi_{ij}$ and $\chi_{11}, \chi_{12},\ldots, \chi_{mn}$ are arbitrary positive numbers and similar to the previous problem we target to recover the matrix $\mathbf{M}$ as cheap as possible. 
\medskip\\
\textbf{Solution:} We describe the solution in the following algorithm:
\begin{algorithm}
\caption*{\textbf{ERHC: } \hypertarget{erhc}{Exact} recovery with heterogeneous cost}
 \textbf{Input:}  $d = \overline{\psi}(\mathbb{U})+1$ here $\mathbb{U}$ is the column space of the underlying matrix \\
 \textbf{Initialize:} $\widehat{\mathbf{M}}$ set to $m\times n$ sized null matrix, $\widehat{\mathbf{U}}^0=\emptyset$, $k=0$

\begin{algorithmic}[1]\label{algj}
    \FOR{$i$ from $1$ to $m$,}
    \STATE $\chi^i =\sum_{j=1}^{n}\chi_{ij}$ 
    \ENDFOR
    
    \STATE Sort $\chi^i$s with increasing order and select first $d$ and denote their index set by$-R$         
    \STATE Observe entire $\mathbf{M}_{R:}$ 
    \FOR{$i$ from $1$ to $m$,}
    \STATE $\overline{\chi}^i =\sum_{j\in[n]\setminus R}\chi_{ij}$ 
    \ENDFOR

    \STATE Sort $\overline{\chi}^i$s with increasing order and lets denote $\{i_1,i_2,\ldots,i_n\}$ as $\overline{\chi}^{i_1} \leq \overline{\chi}^{i_2} \leq \ldots \leq  \overline{\chi}^{i_n}$

    \FOR{$h$ from $1$ to $m$,}
    \STATE \hspace{0.2in} \textbf{If} $\| \mathbf{M}_{R:i_h}-\mathcal{P}_{\widehat{\mathbf{U}}^k_{R}} \mathbf{M}_{R :i_h}\|^2 >0$ 
    \STATE \hspace{0.4in} Fully observe $\mathbf{M}_{:i_h}$ add it to the basis                 $\widehat{\mathbf{U}}^k$
    \STATE \hspace{0.4in} Orthogonalize  $\widehat{\mathbf{U}}^k$
    \STATE \hspace{0.4in} $k=k+1$ 
    \STATE \hspace{0.2in} \textbf{Otherwise:} $\widehat{\mathbf{M}}_{:i_h} = \widehat{\mathbf{U}}^k \widehat{\mathbf{U}}^{k^+}_{R:} \widehat{\mathbf{M}}_{R : i_h}$
\ENDFOR

    \STATE return $\widehat{\mathbf{M}}$ 

\end{algorithmic}
 \textbf{Output:} Underlying matrix $\widehat{\mathbf{M}}$
\end{algorithm}

\paragraph{Correctness:} We can see the correctness of $\mathbf{ERHC}$ is due to the correctness of $\mathbf{ERCS}$ as selecting cheapest $\overline{\psi}(\mathbb{U})+1$ is special case of selecting any $\overline{\psi}(\mathbb{U})+1$ many columns and iteration order over the columns doesn't matter similarly for this case too.

\paragraph{Optimality:} Unlike to the previous case, greedy algorithm doesn't give us the cheapest combination of columns and rows. 
Following example provides a matrix and entry costs that shows that greedy algorithm is not optimal.

\[ 
\mathbf{M} = \begin{bmatrix}
    1       & 1 & 2 & 3\\
    1       & 2 & 3 & 4\\
    1       & 3 & 4 & 5\\
    1       & 4 & 5 & 6
\end{bmatrix} 
\hspace{10mm}\mathbf{\chi} = \begin{bmatrix}
    1   & 1 & 4 & 1 \\
    1   & 5 & 3 & 4\\
    4   & 3 & 4 & 4\\
    1   & 4 & 4 & 8
\end{bmatrix} 
\]

The greedy algorithm for this case observes rows $R=\{1,2\}$  and columns $C= \{1,2\}$ which has overall cost of:

\begin{align*}
 (1+1+4+1)+(1+5 &+3+4) +(4+1) +(3+4)   = 32
\end{align*}

However, observing ${R}=\{1,3\}$  and columns ${C}= \{1,3\}$ would give us overall cost of:

\begin{align*}
 (1+1+4+1)+(4+3 +4+4) +(1+1) +(3+4) = 31
\end{align*}

which is cheaper than greedy algorithm.\\\\
However, with the same cost matrix, there are other matrices that shares the same column space as $\mathbf{M}$ (therefore the same column space sparsity number) but greedy algorithm is still optimal.
For the same cost matrix with a slightly modified underlying matrix, we can give an example:

\[ 
\overline{\mathbf{M}} = \begin{bmatrix}
    1       & 1 & 2 & 2\\
    1       & 2 & 2 & 3\\
    1       & 3 & 2 & 4\\
    1       & 4 & 2 & 5
\end{bmatrix} 
\hspace{10mm}\mathbf{\chi} = \begin{bmatrix}
    1   & 1 & 4 & 1 \\
    1   & 5 & 3 & 4\\
    4   & 3 & 4 & 4\\
    1   & 4 & 4 & 8
\end{bmatrix} 
\]

This gives us the conclusion, with just information of the observation cost matrix and column space sparsity number, we cannot pick theoretical optimal set of rows and columns that is guaranteed carrying all of information of the underlying matrix.
\paragraph{2-Optimality:} Even though greedy algorithm cannot return the optimal set of rows and columns, here we show that the overall cost of the cost oof the algorithm is at most twice expensive than optimal. \\[2.5ex]
We denote the row set and column set parameter of optimal 2-stage algorithm $\tilde{R}$ and $\tilde{C}$ and cost of it by $\sigma_{OPT}$.
Then, we can decompose optimal soluton into its parts as following:

    $$\sigma_{OPT}=\chi(\mathbf{M}_{\tilde{R}:}) + \chi(\mathbf{M}_{:\tilde{C}}) - \chi    (\mathbf{M}_{\tilde{R}:\tilde{C}})$$.
    
Trivially,  both of the following inequalities satisfied 

$$\chi(\mathbf{M}_{\tilde{R}:\tilde{C}}) \leq  \chi(\mathbf{M}_{:\tilde{C}})$$

$$\chi(\mathbf{M}_{\tilde{R}:\tilde{C}}) \leq  \chi(\mathbf{M}_{\tilde{R}:})$$

which these inequalities implies that  

$$\sigma_{OPT} \geq \mathrm{max}(\chi(\mathbf{M}_{:\tilde{C}}) , \chi(\mathbf{M}_{\tilde{R}:})).$$

Now lets decompose cost of greedy algorithm to its pieces:

    $$\sigma_{G}=\chi(\mathbf{M}_{R:}) + \chi(\mathbf{M}_{:C}) - \chi(\mathbf{M}_{R:C})$$
    
Note that the greedy algorithm doesn't necessarily selects cheapest basis columns, however selected columns minimizes the overall cost after rows selected. 
Therefore, we conclude that if we denote the set of cheapest columns by $C^B$, then the following inequality satisfied:

\begin{align*}
\sigma_{G} =\chi(\mathbf{M}_{R:}) + \chi(\mathbf{M}_{:C}) - \chi(\mathbf{M}_{R:C})     
& \leq \chi(\mathbf{M}_{R:}) + \chi(\mathbf{M}_{:C^B}) - \chi(\mathbf{M}_{R:C^C})  \\[1.2ex]
& \leq 2 \hspace{1mm} \mathrm{max}\big(\chi(\mathbf{M}_{R:}) , \chi(\mathbf{M}_{:C^B}) \big)
\end{align*}

As we discussed before in order to have guarantee that we will be able to have full information to detect linearly independent columns we need to observe at least $\psi(\mathbb{U})+1$ many rows.\\[2.5ex]
Moreover, as greedy algorithm observe exactly $\overline{\psi}({\mathbb{U}})+1$ many rows by choosing cheapest columns we are guaranteed to have:

$$ \chi(\mathbf{M}_{R:}) \leq  \chi(\mathbf{M}_{\tilde{R}:})$$

Similarly as $C^B$ represents the set of cheapest columns, we have:

$$ \chi(\mathbf{M}_{:C^B}) \leq  \chi(\mathbf{M}_{:\tilde{C}})$$

which together implies 

$$ \mathrm{max}\big(\chi(\mathbf{M}_{R:}), \chi(\mathbf{M}_{:C^B}) \big) \leq  \mathrm{max}\big(\chi(\mathbf{M}_{\tilde{R}:}), \chi(\mathbf{M}_{:\tilde{C}}) \big).$$

Putting all inequalities together we conclude:
\begin{align*}
\sigma_G \leq 2 \hspace{1mm}\mathrm{max}\big(\chi(\mathbf{M}_{R:}) , \chi(\mathbf{M}_{:C^B}) \big)
\leq 2 \hspace{1mm} \mathrm{max}\big(\chi(\mathbf{M}_{\tilde{R}:}) , \chi(\mathbf{M}_{:\tilde{C}}) \big) 
\leq 2 \sigma_{OPT}
\end{align*}

Therefore, we conclude that greedy algorithm gives us 2-optimal algorithm.
\paragraph{Tightness:} In the following example, we see that greedy algorithm cannot guarantee better than 2-optimality:\\

\[ \mathbf{\chi} = \begin{bmatrix}
    \frac{\epsilon}{100}   & \frac{\epsilon}{100} & \frac{\epsilon}{100} & \frac{\epsilon}{100} & 10-\epsilon & 10-\epsilon  \medskip\\
    \frac{\epsilon}{100}   & \frac{\epsilon}{100} & \frac{\epsilon}{100} & \frac{\epsilon}{100} & 10-\epsilon & 10-\epsilon  \medskip\\
    10  & 10 & \frac{\epsilon}{100} & \frac{\epsilon}{100} & \frac{\epsilon}{100} & \frac{\epsilon}{100}  \medskip \\
    10  & 10 & \frac{\epsilon}{100} & \frac{\epsilon}{100} & \frac{\epsilon}{100} & \frac{\epsilon}{100} \medskip \\
    \frac{\epsilon}{100}  & \frac{\epsilon}{100} & 10 & 10 & 10-\epsilon & 10-\epsilon \medskip\\
    \frac{\epsilon}{100}  & \frac{\epsilon}{100} & 10 & 10 & 10-\epsilon & 10-\epsilon \medskip\\
\end{bmatrix} 
\]

It is clear that optimal choice is ${C}=\{1,2\}$ and ${R}=\{3,4\}$ which gives the cost of :

$$\sigma_{OPT} = 10+10+10+10 + 16 \times \frac{\epsilon}{100} =40 + \frac{\epsilon}{6.25}$$
However, greedy algortihm will pick ${R}=\{1,2\}$ in the first stage which has overall cost of 

$$(10-\epsilon)+(10-\epsilon)+(10-\epsilon)+(10-\epsilon) + 8 \times \frac{\epsilon}{100} =40-4\epsilon+   \frac{\epsilon}{12.5}$$

Then in the next stage it choose columns ${C}=\{5,6\}$ which also has cost of 

$$(10-\epsilon)+(10-\epsilon)+(10-\epsilon)+(10-\epsilon) + 4 \times \frac{\epsilon}{100} =40-4\epsilon + \frac{\epsilon}{25}$$

 which all together cumulative cost is 

 $$(40-4\epsilon) +   \frac{\epsilon}{12.5} + (40-4\epsilon) +   \frac{\epsilon}{25}= 80-8\epsilon + \frac{3}{25}\epsilon.$$
 
 To find the fraction of this cost to optimal cost we get
 
 $$\frac{\sigma_G}{\sigma_{OPT}} = \frac{80-8\epsilon + \frac{2\epsilon}{25}}{40+\frac{\epsilon}{6.25}}  \approx 2 -\frac{\epsilon}{5}.$$
 
Therefore for any number smaller than $2$, we can choose an $\epsilon$ which ratio of the cost of greedy algorithm to optimal set is larger than that number.
This implies that, 2-optimality of the algorithm $\mathbf{ERHC}$ is tight.

\bibliography{collas2022_conference}
\bibliographystyle{collas2022_conference}

\appendix

\end{document}